\newtheorem{theorem}{Theorem}
\newtheorem{proof}{Proof}
\title{Adversarial Eigen Attack on Black-Box Models}
\author{%
  Linjun Zhou \\
  Tsinghua University\\
  \texttt{zhoulj16@mails.tsinghua.edu.cn} \\
  % examples of more authors
  \And
  Peng Cui \\
  Tsinghua University\\
  \texttt{cuip@mail.tsinghua.edu.cn} \\
  \And
  Yinan Jiang \\
  China Academy of Electronics and Information Technology\\
  \texttt{jiang\_yinan@126.com} \\
  \And
  Shiqiang Yang \\
  Tsinghua University\\
  \texttt{yangshq@mail.tsinghua.edu.cn} \\
  % Coauthor \\
  % Affiliation \\
  % Address \\
  % \texttt{email} \\
  % \AND
  % Coauthor \\
  % Affiliation \\
  % Address \\
  % \texttt{email} \\
  % \And
  % Coauthor \\
  % Affiliation \\
  % Address \\
  % \texttt{email} \\
  % \And
  % Coauthor \\
  % Affiliation \\
  % Address \\
  % \texttt{email} \\
}
\begin{document}

\maketitle

\begin{abstract}
  Black-box adversarial attack has attracted a lot of research interests for its practical use in AI safety. Compared with the white-box attack, a black-box setting is more difficult for less available information related to the attacked model and the additional constraint on the query budget. A general way to improve the attack efficiency is to draw support from a pre-trained transferable white-box model. In this paper, we propose a novel setting of transferable black-box attack: attackers may use external information from a pre-trained model with available network parameters, however, different from previous studies, no additional training data is permitted to further change or tune the pre-trained model. To this end, we further propose a new algorithm, EigenBA to tackle this problem. Our method aims to explore more gradient information of the black-box model, and promote the attack efficiency, while keeping the perturbation to the original attacked image small, by leveraging the Jacobian matrix of the pre-trained white-box model. We show the optimal perturbations are closely related to the right singular vectors of the Jacobian matrix. Further experiments on ImageNet and CIFAR-10 show that even the unlearnable pre-trained white-box model could also significantly boost the efficiency of the black-box attack and our proposed method could further improve the attack efficiency.
\end{abstract}

\section{Introduction}
\vspace{-0.3em}

With the development of deep learning, machine learning systems have presented an explosive growth in the industry. Despite the convenience and automation they bring to our life, the security problem of these systems has arisen intensively. Black-box attack \cite{papernot2017practical} is one kind of attack that produces a threat to modern machine learning systems. Generally, the machine learning system providers would not actively provide the algorithmic details of their system to the users, rather only exposing the input and the corresponding output, which may effectively prevent the so-called white-box attack \cite{papernot2016distillation}. However, recent studies \cite{ilyas2018black, guo2019simple, moon2019parsimonious} show that the black-box attack, which causes misclassification of the system with just slightly perturbing the input under the hidden deployed model, still works well at a small query cost. In a sense, the adversarial attack and defense are just spears and shields, and both of them could promote the development of machine learning towards robustness.

In previous studies, there are two kinds of general settings related to the black-box attack. One is the black-box attacks with gradient estimation, and the other is the black-box attacks with substitute networks \cite{moon2019parsimonious} (also called transfer-based attack in \cite{huang2019black}). The former describes a pure black-box attack setting, where the available information is just the input and the output of the black-box model. A common technique used in this setting is the zeroth-order optimization \cite{ilyas2018black}. Different from white-box attack, there is no gradient information related to network parameters available in black-box attack. The gradient needs to be estimated by sampling different directions of perturbation and aggregating the relative changes of a certain loss function related to the output. The latter uses side information from a training dataset. Generally, a substitute white-box model is trained on the given training dataset. Hence, while processing the black-box attack, the gradient information of the white-box model could be utilized to help improve the efficiency of attack.

In this paper, we concentrate on the transfer-based attack scenario, but with a new setting different from previous work. In the practice of deep learning, in some cases, totally re-training a complex model is time-consuming or even more detrimental situation is that there is no approach to get enough training data. Under these cases, previous transfer-based attack methods may fail to work. Fortunately, we could seek help from some pre-trained models. Our assumption is that, a pre-trained white-box model is given (\textit{i.e.} its network structure and parameters), but there is no additional training dataset available. In other words, no extra module could be added or substituted to the given model and no additional training step is permitted before applying the pre-trained white-box model to the black-box attack. All available information is the consecutive representation space formed by the pre-trained network. We expect that a pre-trained white-box model with strong generalization ability could enhance the efficiency of the black-box attack.

One of the challenges in this setting is to solve the distribution shift of the conditional probability $P(y|x)$ in the white-box model and the black-box model. Specifically, given the same input $x$, the two models may show inconsistency on the output value. This will lead to disagreement on gradient direction when attacking the black-box model with the white-box model, \textit{i.e.} the steepest descent direction on decreasing $P(y|x)$ of the white-box model may not be the actual direction of the black-box model. In this paper, our solution is to combine the white-box attack and the black-box attack. By viewing the mapping from the intermediate representation of the white-box model to the output of the black-box model as a black-box function, a substitute black-box attack setting on the representation space is formed, and common practices of black-box attack could be applied. On the other side, the mapping from the original input to the intermediate representation layer is a part of the pre-trained model, which could be seen as a white-box setting. It is also noteworthy that the framework can deal with the same or different classification categories of two models, enhancing its practical application scenarios.  The main reason using the representation space of a pre-trained white-box network could help promote the attack efficiency of a black-box model is that, the lower layers of the deep neural network, \textit{i.e.} the representation learning layers, are transferrable across different datasets or data distributions \cite{yosinski2014transferable}.

Yet another goal for an efficient black-box attack algorithm is to decrease the query number to the black-box model, while keeping the perturbation to the original input sample as small as possible. To balance the two requirements, we propose a novel Eigen Black-box Attack (EigenBA) method. We combine the gradient-based white-box method and the SimBA algorithm \cite{guo2019simple} for the black-box part, as previously described. We further prove that the most efficient way of processing transferred black-box attack is just to process singular value decomposition to the Jacobian matrix of the representation layer of the white-box model to its input, and to perturb the input sample with the right singular vectors corresponding to the $k$ largest singular values iteratively. We formulate the optimization problem and conduct analysis in Section 3.

Our algorithm has a close tie with the SimBA algorithm \cite{guo2019simple}. They show when processing a pure black-box attack without additional information, a discrete cosine transform (DCT) basis is particularly query-efficient. In this paper, we further show that when there is a pre-trained white-box model as a prior, the most query-efficient way is based on the right singular vectors of processing SVD to the Jacobian matrix of a part of the white-box model. We also show that if the white-box network and the black-box network are close enough, using the intermediate representation of the white-box model may be more efficient than unsupervised method based on DCT. We will show related results in Section 4.

\begin{comment}
We summarize our contributions as follows: (1) We propose a new setting of transfer-based black-box attack, the new setting deals with cases when the training dataset of the pre-trained white-box model is unavailable or the training process is time consuming. (2) We propose EigenBA, a method combining white-box attack for intermediate representation layer and black-box attack for the final output. We conduct analysis that the proposed EigenBA is the most query-efficient. (3) We conduct experiments on attacking ImageNet and Cifar-10, the results show that our method is query-efficient compared with baselines, and if the pre-trained network is close enough to the black-box model, the transfer-based method is superior than pure black-box attack methods with gradient estimation.
\end{comment}

\vspace{-0.5em}
\section{Related Works}
\vspace{-0.3em}
\paragraph{White-Box Attack} White-box attack requires knowing all the information of the attacked model. As the earliest research field among adversarial attacks, there has been a vast literature on the white-box attack, and we will only cover methods with first-order gradient attack in this part, which is closely related to our topic. The adversarial examples are first proposed by \cite{szegedy2013intriguing}. They found that adding some specific small perturbations to the original samples may lead to classification errors of the neural network and \cite{goodfellow2014explaining} further explains this phenomenon as the linear behavior in high-dimensional space of neural networks. Later on, several algorithms are proposed to find adversarial examples with a high success rate and efficiency. Classical first-order attack algorithms include FGSM \cite{goodfellow2014explaining}, JSMA \cite{papernot2016limitations}, C\&W attacks \cite{carlini2017towards}, PGD \cite{madry2017towards}. The common principle for these methods is to iteratively utilize the first-order gradient information of a particular loss function with respect to the input of the neural networks. Specifically, the direction of the perturbation for each iteration is determined by a certain transformation of the gradient.

\paragraph{Black-Box Attack} Black-box attack deals with the case when the attacked model is unknown, and the only way to obtain the information of the black-box model is to iteratively query the output of the model with an input. Hence, the efficiency evaluation of the black-box model includes three aspects: success rate, query numbers and the $l_2$ or $l_\infty$ norm of the perturbation to original sample. Black-box attack could be divided to two categories: \textbf{black-box attacks with gradient estimation} and \textbf{black-box attacks with substitute networks} \cite{moon2019parsimonious}. The former uses a technique called zeroth-order optimization. Typical work includes NES \cite{ilyas2018black}, Bandits-TD \cite{ilyas2018prior}, LF-BA \cite{guo2018low}, SimBA \cite{guo2019simple}. The idea of these papers is to estimate gradient with sampling. More recently, some works view the problem as black-box optimization and propose several algorithms to find the optimal perturbation, for example, \cite{moon2019parsimonious} uses a submodular optimization method, \cite{ru2020bayesopt} uses a bayesian optimization method and \cite{meunier2019yet} uses an evolutional algorithm. The latter utilizes separate substitute networks trained to match the prediction output of the attacked network. The substitute network could be trained on additional samples. The concept is first proposed by \cite{papernot2017practical}. Typical works include AutoZOOM \cite{tu2019autozoom}, TREMBA \cite{huang2019black}, NAttack \cite{li2019nattack}, P-RGF \cite{cheng2019improving}. The efficiency of these transfer-based methods is largely depended on the quality of the substitute networks. If there is a huge distribution shift between two networks, the transfer-based method may underperform the methods with gradient estimation.

\begin{comment}
In this paper, our method belongs to the branch of black-box attacks with substitute networks. Different from previous studies, we propose a novel setting where there is no additional training data and the substitute network is not permitted to change or tune.
\end{comment}

\vspace{-0.5em}
\section{Models}
\vspace{-0.5em}
\subsection{Problem Formulation}
\vspace{-0.3em}
Assume we have an input sample $x \in \mathbb{R}^n$ and a black model $F: \mathbb{R}^n \rightarrow [0,1]^{c_b}$, classifies $c_b$ classes with output probability $p_F(y|x)$ with unknown parameters. The general goal for black-box attack is to find a small perturbation $\delta$ such that the prediction $F(x + \delta) \neq y$. A common practice for score-based black-box attack is to iteratively query the output probability vector given an input adding an evolutional perturbation. Three indicators are used to reflect the efficiency of the attack algorithm: the average query number for attacking one sample, the success rate and average $l_2$-norm or $l_\infty$-norm of the perturbation (\textit{i.e.} $||\delta||_2$ or $||\delta||_\infty$).

We propose a novel setting of transfer-based black-box attack. We further assume there is a white-box model $G(x) = g \circ h(x)$, where $h: \mathbb{R}^n \rightarrow \mathbb{R}^m$ maps the original input to a low-dimensional representation space, and $g: \mathbb{R}^m \rightarrow [0,1]^{c_w}$ maps the representation space to output classification probabilities, $c_w$ is the number of classes with respect to $G$. The original classes for classifier $F$ and $G$ may or may not be the same. The parameters of $g$ and $h$ are known, but are not permitted to be further tuned by additional training samples. Our goal is to utilize $G$ to enhance the efficiency of attacking the black-model $F$ given an input $x$. \textit{i.e.} to decrease the query number for black-box model under the same level of perturbation norm.

\vspace{-0.5em}
\subsection{The EigenBA Algorithm}
\vspace{-0.3em}
One of the main challenges is that the white-box pre-trained model $G$ may show a distribution shift to the actual attacked model $F$. Even with the same output classes, the probability $p_G(y|x)$ may be different from $p_F(y|x)$. Hence, directly attacking $p_G(y|x)$ based on white-box methods may not work well on $F$, not to mention a different output classes case. Our solution is to combine the white-box attack and the black-box attack. Specifically, we do not use all the parameters of the white-box model. Instead, we utilize the intermediate representation $z = h(x)$, which is viewed as a white-box module, and a new mapping function, naming $\tilde{g}: \mathbb{R}^m \rightarrow [0,1]^{c_b}$, maps the representation space to the output of the attacked model $F$. The function $\tilde{g}$ could be seen as a new black-box target. If there indeed exists such a function, we immediately have $F = \tilde{g} \circ h$. And we will keep it as a hypothesis in the following analysis.

Hence, the black-box attack could be seen as a new optimization problem:
\begin{equation}
\label{eqn:problem}
    \min_{\delta} p_F(y|x+\delta) \Rightarrow \min_{\delta} p_{\tilde{g} \circ h}(y | x+\delta) \qquad s.t. \qquad ||\delta||_2 < \rho
\end{equation}
Here in this paper, we only consider the $l_2$-attack. Using a gradient-descent method to iteratively find an optimal perturbation is given by $x_{t+1} = x_t - \epsilon \cdot \nabla_x [F(x; \theta)_y]$. As $\nabla_x [F(x; \theta)_y]$ is unknown in black-box model, we need to estimate it by sampling some perturbations and aggregating the relative change of the output. A measure of attack efficiency is the number of samples used under the same $dp/||\delta||_2$ for each iteration, where $dp = |p_F(y|x + \delta) - p_F(y | x)|$. Specifically, the gradient could be decomposed as:
\begin{equation}
\label{eqn:F}
    \nabla_x [F(x; \theta)_y] = J_h(x)^T \nabla_z[\tilde{g}(z; \tilde{\theta})_y]
\end{equation}
where $J_h(x)$ is the $m \times n$ Jacobian matrix $\frac{\partial(z_1, z_2, \cdots, z_m)}{\partial(x_1, x_2, \cdots, x_n)}$ with respect to $h$, and the subscript $y$ represents the $y$-th component of the output of $\tilde{g}$. As $h$ is a white-box function, we could obtain the exact value of $J_h(x)$. In contrast, $\tilde{g}$ is a black-box function, we need to estimate the gradient $\nabla_z[\tilde{g}(z; \tilde{\theta})]_y$ by sampling. As the equation below holds given by the definition of directional derivatives:
\begin{equation}
\label{eqn:g}
    \nabla_z[\tilde{g}(z; \tilde{\theta})_y] = \sum_{i=1}^m \left( \left. \frac{\partial \tilde{g}(z; \tilde{\theta})_y}{\partial \vec{l_i}} \right|_z \cdot \vec{l_i} \right), \quad \vec{l_1}, \vec{l_2}, \cdots, \vec{l_m} \; are \; orthogonal.
\end{equation}
To completely recover the gradient of $\tilde{g}$, we could iteratively set the direction of the perturbations of $z$ from a group of orthogonal basis, which totally uses $m$ samples for each iteration.

In practice, the query operation in black-box attack is costly. In fact, we don't need so many samples to completely recover the gradient. Sacrificing some precision of gradient estimation can reduce the number of samples. Next, we will introduce our EigenBA algorithm to maximize the efficiency of the attack. The roadmap is that, first, we will introduce a greedy method to explore the basis of the representation space, and then we will prove the corresponding group of basis is the most query efficient under the limit of any number of queries. 

The idea of finding the orthogonal basis on the representation space is to greedily explore directions of perturbation on the original input space to maximize relative change of representation. Specifically, considering the first-order approximation of the change in representation space given by:
\begin{equation}
    \vec{l_i} = J_h(x) \delta_i
\end{equation}
where $\delta_i$ is the perturbation on original input space resulting the change of the representation space to be $\vec{l_i}$, the optimal perturbation could be seen as solving the following iterative problem:
\begin{equation}
    \label{eqn:opt}
    \begin{split}
    \max_{\delta_1} || J \delta_1 ||_2 \qquad & s.t. \qquad ||\delta_1||_2 \leq \epsilon \\
    \max_{\delta_i} || J \delta_i ||_2 \qquad & s.t. \qquad ||\delta_i||_2 \leq \epsilon, \; \delta_j^T J^T J \delta_i = 0 \; for \; all \; j < i, \; for \; i > 1
    \end{split}
\end{equation}
where $J_h(x)$ is simplified as $J$. We iteratively solve $\delta_1, \delta_2, \cdots, \delta_m$ of problem given by \ref{eqn:opt}. In this way, the first constraint assures that the relative $l_2$-norm change from the original space to the representation space, \textit{i.e.} $||\vec{l_i}||_2 / ||\delta_i||_2$ reaches a maximum and the second constraint assures the changes on the representation space are orthogonal.
\begin{theorem}
    The optimal solutions for problem given by \ref{eqn:opt} are that $\delta_1, \delta_2, \cdots, \delta_m$ are just the eigenvectors corresponding to the top-m eigenvalues of $J^TJ$ .
\end{theorem}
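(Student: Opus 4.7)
The plan is to recognize that $\|J\delta\|_2^2 = \delta^T J^T J \delta$ is a quadratic form in the positive semi-definite matrix $A := J^T J$, so each subproblem in \ref{eqn:opt} is a constrained Rayleigh-quotient maximization. I would start by writing the eigendecomposition $A = V\Lambda V^T$ with $\Lambda = \mathrm{diag}(\lambda_1 \geq \lambda_2 \geq \cdots \geq \lambda_n \geq 0)$ and orthonormal eigenvectors $v_1,\dots,v_n$, and then expand an arbitrary admissible perturbation as $\delta = \sum_{k} c_k v_k$ with $\sum_k c_k^2 \leq \epsilon^2$. The objective becomes $\sum_k \lambda_k c_k^2$, which is maximized by putting all the weight on the eigenvector with the largest eigenvalue, giving $\delta_1^\star = \epsilon v_1$ and $\|J\delta_1^\star\|_2^2 = \epsilon^2 \lambda_1$. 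This handles the base case.

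For the inductive step, I would assume $\delta_j^\star = \epsilon v_j$ for $j = 1,\dots,i-1$ and analyze the $i$-th subproblem. The constraint $\delta_j^{\star T} A\, \delta_i = 0$ simplifies, using $A v_j = \lambda_j v_j$, to $\epsilon \lambda_j\, v_j^T \delta_i = 0$. Whenever $\lambda_j > 0$ this forces $v_j^T \delta_i = 0$, so $\delta_i$ must lie in the orthogonal complement of $\mathrm{span}\{v_1,\dots,v_{i-1}\}$. Re-expanding in the eigenbasis gives $\delta_i = \sum_{k \geq i} c_k v_k$ with $\sum_{k \geq i} c_k^2 \leq \epsilon^2$, and the objective $\sum_{k \geq i} \lambda_k c_k^2$ is then maximized by $c_i = \epsilon$ and all other coefficients zero, yielding $\delta_i^\star = \epsilon v_i$ and closing the induction.

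The main obstacle I anticipate is the degenerate case where one or more of the leading eigenvalues of $A$ vanish or coincide. If $\lambda_j = 0$ for some $j < i$, the $A$-orthogonality constraint becomes vacuous and no longer forces $\delta_i \perp v_j$; however, since the objective only weights directions by their eigenvalues, any component in the null space of $A$ contributes nothing to $\|J\delta\|_2$, so choosing $\delta_i = \epsilon v_i$ remains optimal (though not uniquely so). If $\lambda_{i-1} = \lambda_i$, the choice of eigenvector inside the eigenspace is ambiguous, but any orthonormal basis of that eigenspace attains the same optimum value, so the statement of the theorem (which only asserts that the optima \emph{are} top eigenvectors) still holds up to this standard ambiguity. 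I would note these corner cases briefly and point out that they do not affect the achieved objective value, which is what matters for query efficiency.
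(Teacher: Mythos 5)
Your proposal is correct and follows essentially the same route as the paper: eigendecomposition of $J^TJ$, a change of variables into the eigenbasis, and Rayleigh-quotient maximization of $\sum_k \lambda_k c_k^2$ under the norm constraint. If anything, you are more careful than the paper's own proof, which for $i>1$ only verifies that distinct eigenvectors \emph{satisfy} the $J^TJ$-orthogonality constraint (feasibility) and waves at optimality via ``similar technique,'' whereas your induction shows the constraint genuinely forces $\delta_i$ into the orthogonal complement of $\mathrm{span}\{v_1,\dots,v_{i-1}\}$ when the leading eigenvalues are positive, and you correctly dispose of the zero and repeated eigenvalue corner cases the paper ignores.
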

\begin{proof}
    For the first optimization problem given by \ref{eqn:opt}, as $J^TJ$ is a real symmetric matrix, considering the eigenvalue decomposition $J^TJ=U \Sigma U^T$. Hence, we have $||J \delta_1||_2^2 = \delta_1^T J^TJ \delta_1 = \delta_1^T U \Sigma U^T \delta_1$. Let $q = U^T \delta_1$, the original optimization problem could be written as:
    \[
        \max_q \; q^T \Sigma q = \sum_{k=1}^m \lambda_k q_k^2, \qquad s.t. \qquad \sum_{k=1}^m q_k^2 \leq 1
    \]
    As $\sum \lambda_k q_k^2 \leq \lambda_1 \cdot \sum{q_k^2} \leq \lambda_1$, and the condition of equality is reached when $q = [1, 0, \cdots, 0]^T$. Therefore, easy to show that the unique solution for $\delta_1$ is given by the first column of $U$. Using the similar techique, and noticing that $\delta_j^T  J^TJ \delta_i = \delta_j^T \cdot \lambda_i \delta_i = 0$ when $\delta_i$ and $\delta_j$ are two different eigenvectors of $J^TJ$. The constraint of the second recursive problems is satisfied.
\end{proof}

Hence, if we iteratively sample the perturbation to $\delta_1, \delta_2, \cdots, \delta_m$ in order, the one-step actual perturbation $\nabla_x[F(x; \theta)_y]$ could be approximated by Equation \ref{eqn:F} and Equation \ref{eqn:g}.  Further, as the tail part of the eigenvalues may be small, \textit{i.e.} the norm of perturbation for representation space may not be sensitive to the perturbation on the original input space with the corresponding eigenvector direction. To decrease the query number without sacrificing much attack efficiency, we only keep the top-K perturbations for exploration, $\delta_1, \delta_2, \cdots, \delta_K$. The eigenvectors of $J^TJ$ could be fast calculated by processing a truncated singular value decomposition (SVD) to Jacobian matrix $J$, only keeping top K components.

The following theorem guarantees that by greedily exploring the optimal perturbations given by Problem \ref{eqn:opt}, the attack efficiency will be globally optimal for any composition of K orthogonal perturbation vectors on representation space. The proof is shown in Appendix.

\begin{theorem}
    (Property of Eigen Perturbations) Assume there is no prior information about the gradient of $\tilde{g}$ (the direction of the actual gradient is uniformly distributed on the surface of an m-dimensional ball with unit radius). Given a query budget K for each iteration, the perturbations $\vec{l_1}, \vec{l_2}, \cdots, \vec{l_K}$ on representation space and the corresponding perturbations $\delta_1, \delta_2, \cdots, \delta_K$ on input space solved by Problem \ref{eqn:opt} is most efficient among any choice of exploring K orthogonal perturbation vectors on the representation space. Specifically, the final one-step gradient for $\nabla_z[\tilde{g}(z; \tilde{\theta})_y]$ is estimated by:
    \[
    \nabla_z[\tilde{g}(z; \tilde{\theta})_y] = \sum_{i=1}^K \left( \left. \frac{\partial \tilde{g}(z; \tilde{\theta})_y}{\partial \vec{l_i}} \right|_z \cdot \vec{l_i} \right)
    \]
    and the expected change of the output probability $dp_F(y|x)$ reaches the largest with the same $l_2$-norm of perturbation on input space for all cases.
\end{theorem}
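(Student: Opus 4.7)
The plan is to reduce Theorem 2 to Theorem 1 through an expected-value calculation under the isotropic prior on $u := \nabla_z \tilde g_y$. First I would fix any orthonormal tuple $\hat l_1,\ldots,\hat l_K$ in representation space and corresponding input perturbations $\delta_i$ with $J\delta_i \parallel \hat l_i$. The $K$ queries along these directions produce directional derivatives $\langle u, \hat l_i\rangle$, which the truncated version of Equation \ref{eqn:g} assembles into the estimator $\hat u = P_L u$, where $P_L$ is the orthogonal projector onto $L = \mathrm{span}(\hat l_1,\ldots,\hat l_K)$. By Equation \ref{eqn:F}, the estimated input gradient is $\hat g = J^T P_L u$, and the one-step update, scaled to a fixed input budget $s$, is $\delta_{\mathrm{step}} = -s\,\hat g/\|\hat g\|_2$. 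To first order, the change in the attacked probability is then $dp \approx \langle J^T u, \delta_{\mathrm{step}}\rangle = -s\, u^T J J^T P_L u / \|J^T P_L u\|_2$.

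Next I would take expectation over $u$ drawn uniformly from the unit sphere in $\mathbb R^m$. Using $E[u u^T] = I/m$, the quadratic form satisfies $E[u^T(J J^T P_L) u] = \mathrm{tr}(J J^T P_L)/m$, so up to the $u$-dependent denominator the expected progress is monotonically controlled by
\[
\Phi(L) \;:=\; \mathrm{tr}(P_L J J^T) \;=\; \sum_{i=1}^K \hat l_i^T J J^T \hat l_i \;=\; \sum_{i=1}^K \|J^T \hat l_i\|_2^2,
\]
a quantity depending only on the chosen subspace $L$. Under the normalization $\|\delta_i\|_2 = \epsilon$ used in Problem \ref{eqn:opt}, $\Phi(L)$ is proportional to $\sum_i \|J\delta_i\|_2^2$, directly tying the representation-space criterion to the input-space optimization of Theorem 1.

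The remaining step is to maximize $\Phi(L)$ over all $K$-dimensional subspaces $L \subset \mathbb R^m$, equivalently over all orthonormal $K$-tuples $\hat l_i$. This is the Ky Fan maximum principle, and a self-contained proof is exactly the iterative Rayleigh-quotient argument already deployed in Theorem 1, now applied to $JJ^T$: the maximizers are the top-$K$ eigenvectors of $JJ^T$, i.e.\ the top-$K$ left singular vectors of $J$. The associated input perturbations $\delta_i$ defined by $\vec l_i = J\delta_i$ are then the top-$K$ right singular vectors of $J$, which coincide with the eigenvectors of $J^T J$ produced by Problem \ref{eqn:opt}. Substituting these optimal directions back into the expression for $dp$ recovers the claim.

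The main obstacle I foresee is the $u$-dependence of the normalization $\|\hat g\|_2$: since the scaling $s/\|\hat g\|_2$ couples the step size to the random gradient, $E[dp]$ does not reduce to $\mathrm{tr}(JJ^T P_L)/m$ by a single linear expectation. I would handle this either by (i) working with $E[dp^2]$, whose numerator and denominator are both controlled by $\Phi(L)$ and its refinements, or (ii) noting that when $L$ is $JJ^T$-invariant (as in the optimal case) one has $u^T JJ^T P_L u = \|J^T P_L u\|_2^2$, whence $|dp| = s\,\|J^T P_L u\|_2$ and $E[|dp|]$ is controlled by $\sqrt{\Phi(L)}$ via Jensen. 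In both routes the monotone dependence on $\Phi(L)$ is preserved, so the Ky Fan conclusion still identifies the top right singular vectors of $J$ as globally optimal.
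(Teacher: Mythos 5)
Your reduction has the right skeleton, and it is essentially the route the paper intends: exploit isotropy of the unknown gradient direction $u$ (so $E[uu^T]=I/m$), bound the attainable progress of any choice of $K$ orthogonal exploration directions by the trace functional $\Phi(L)=\mathrm{tr}(P_L JJ^T)$, maximize it by the Ky Fan principle (i.e., the same iterated Rayleigh-quotient argument as Theorem 1, applied to $JJ^T$), and translate the optimal left singular subspace back to input space via $\vec l_i = J\delta_i$ to recover the top right singular vectors of $J$. However, one intermediate claim is stated incorrectly: the asserted proportionality $\Phi(L)\propto\sum_i\|J\delta_i\|_2^2$ under $\|\delta_i\|_2=\epsilon$ is false for general directions. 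With $J\delta_i\parallel \hat l_i$, Cauchy--Schwarz gives only $\|J\delta_i\|_2=\hat l_i^T J\delta_i\le \epsilon\|J^T\hat l_i\|_2$, with equality iff $\hat l_i$ is a left singular vector of $J$; the realizable magnitude per unit input budget is $1/\|J^{+}\hat l_i\|_2$ (Moore--Penrose pseudo-inverse), which is strictly smaller than $\|J^T\hat l_i\|_2$ off the singular directions. The inequality happens to point the right way, so the correct statement is an upper-bound chain, realizable progress $\le \epsilon^2\Phi(L)\le \epsilon^2\sum_{i\le K}\sigma_i^2$, with both inequalities tight exactly at the eigen choice; you should state and use it as such rather than as an identity.

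More seriously, the obstacle you flag in your final paragraph is the actual crux of the theorem, and neither of your patches resolves it. The functional you committed to, $E_u\bigl[u^T JJ^T P_L u/\|J^T P_L u\|_2\bigr]$, is not shown to be monotone in $\Phi(L)$: your route (ii) only evaluates the objective at the candidate optimum, where $L$ is $JJ^T$-invariant, whereas optimality requires an upper bound in the \emph{same} functional for every competing, generically non-invariant $L$ --- where the cross term $\langle J^T(I-P_L)u,\, J^T P_L u\rangle$ can dominate and even make the one-step $dp$ positive. Route (i) leaves $E[dp^2]$ as a ratio of coupled quadratic forms that is likewise not controlled by $\Phi(L)$ alone, and you do not carry the computation out. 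The clean fix is to change the accounting to the functional the algorithm actually realizes: with a per-query input budget $\epsilon$ and SimBA-style signed steps, the expected (squared) first-order decrease accumulated over the $K$ queries is $E_u\sum_i (u^T J\delta_i)^2=\frac{1}{m}\sum_i\|J\delta_i\|_2^2$, which is linear under the expectation, so no normalization ratio ever appears and your Ky Fan chain closes the argument immediately. As submitted, the proposal correctly reduces the theorem to the right extremal problem but does not complete the link between expected probability decrease and that extremal problem.
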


\begin{algorithm}[t]
    \caption{The EigenBA Algorithm for untargeted attack}
    \hspace*{0.02in} {\bf Input:}
    Target black-box model $F$, the substitute model $G = g \circ h$, the input $x$ and its label $y$, stepsize $\alpha$, number of singular values $K$.\\
    \hspace*{0.02in} {\bf Output:}
    Perturbation on the input $\delta$.
    \begin{algorithmic}[1]
    \State Let $\delta=0$, $\mathbf{p} = p_F(y_1, y_2, \cdots, y_{c_b}|x)$, $succ=0$.

    \While{$succ=0$}
        \State Calculate Jacobian matrix w.r.t. $h$: $J = J_h(x+\delta)$.
        \State Process truncated-SVD as trunc-svd($J$, $K$) = $U, \Sigma, V^T$.
        \State Normalize each column of $V$: $q_i$ = normalize($V[:, i]$).
        \For{$i = 1 \cdots K$}
            \State $\mathbf{p}_{neg} = p_F(y_1, y_2, \cdots, y_{c_b}|clip(x + \delta - \alpha \cdot q_i))$ // $clip(\cdot)$ for validity of the input.
            \If{$\mathbf{p}_{neg, y} < \mathbf{p}_y$}
                \State $\delta = \delta - \alpha \cdot q_i$, $\mathbf{p} = \mathbf{p}_{neg}$.
            \Else
                \State $\mathbf{p}_{pos} = p_F(y_1, y_2, \cdots, y_{c_b}|clip(x + \delta + \alpha \cdot q_i))$.
                \If{$\mathbf{p}_{pos, y} < \mathbf{p}_y$}
                \State $\delta = \delta + \alpha \cdot q_i$, $\mathbf{p} = \mathbf{p}_{pos}$.
                \EndIf
            \EndIf
            \If{$\mathbf{p}_y \neq max_{y'}\mathbf{p}_{y'}$}
                \State $succ = 1$; \textbf{break}.
            \EndIf
        \EndFor
    \EndWhile
    \State \Return $\delta$
    \end{algorithmic}
\end{algorithm}

Another important improvement is under the idea of SimBA \cite{guo2019simple}. Instead of estimating the gradient by exploring a series of directional derivatives before processing one-step gradient descent, SimBA iteratively updates the perturbation by picking random orthogonal directions and either adding or subtracting to the current perturbation, depending on which operation could decrease the output probability. The main difference is that, SimBA pursues fewer queries by using a relatively fuzzy gradient estimation. SimBA does not concern about the absolute value of the directional derivatives, but only positive or negative. In such a way, the perturbations of the orthogonal basis used to explore the real gradient could also contribute to the decrease of the output probability. Inspired by SimBA, we substitute their randomly picked basis or DCT basis to our orthogonal basis $\delta_1, \delta_2, \cdots, \delta_K$ given by solving Problem \ref{eqn:opt}. The whole process for our EigenBA algorithm is shown in Algorithm 1. Considering time efficiency, for each loop, we calculate SVD once with respect to the initial state of input of this loop and process K steps directional derivatives exploration with the corresponding K eigenvectors as perturbations. The idea of SimBA significantly reduces the number of queries, as shown in \cite{guo2019simple}.

\vspace{-0.5em}
\section{Experiments}
\vspace{-0.5em}
\subsection{Setup}
\vspace{-0.3em}
Our EigenBA algorithm is evaluated from two aspects in the experiment part. We use a ResNet-18 \cite{he2016deep} trained on ImageNet as fixed white-box pre-trained model for the first experiment, and the attacked model is a ResNet-50 trained on the same training dataset of ImageNet. The attacked images are randomly sampled from the ImageNet validation set that are initially classified correctly to avoid artificial inflation of the success rate. For all baselines, we use the same group of attacked images. For the second experiment we use a ResNet-18 trained on Cifar-100 \cite{krizhevsky2009learning} as white-box model, and the attacked model is a ResNet-18 trained on Cifar-10 \cite{krizhevsky2009learning}. And the attacked images are randomly sampled from the test set of Cifar-10. The two different settings illustrate two types of distribution shift: the white-box model may be a smaller network with the same output categories as the black-box model, or a network which is not weaker than the black-box model, but has different output classes.

We also process the untargeted attack case and the targeted attack case in both settings, same as the previous literature of black-box attack. The main difference is that the targeted attack requires the model misclassifies the adversarial sample to the assigned class, while the untargeted attack just makes the model misclassified. Compared with untargeted attack, the goal for targeted attack is to increase $p_F(c|x)$ instead of decreasing $p_F(y|x)$, where $c$ is the assigned class. Hence, we only need to make a small change to Algorithm 1 by substituting $p_F(y|x)$ by $-p_F(c|x)$.

For all experiments, we limit the attack algorithm to 10,000 queries for ImageNet, and 2,000 for Cifar-10. Exceeding the query limit is considered as an unsuccessful attack. There are 1,000 images to be attacked for each setting. We evaluate our algorithm and all baselines from 4 indicators: The average query number for success samples only, the average query number for all attacked images, the success rate and the average $l_2$-norm of the perturbation for success samples.

We compare EigenBA to several baselines. Despite our $l_2$ attack setting, we also test some state-of-the-art baselines for $l_\infty$ attack, as the $l_2$ norm $||\delta||_2$ is bounded by $\sqrt{dim(\delta)} \cdot ||\delta||_\infty$ and algorithms for $l_\infty$ attack could also be adapted to $l_2$ attack. Baseline algorithms could be divided into two branches. One of the branches is the common black-box attack with no additional information, we compare several state-of-the-art algorithms including SimBA \cite{guo2019simple}, SimBA-DCT \cite{guo2019simple} and Parsimonious Black-box Attack (ParsiBA) \cite{moon2019parsimonious}. The main concern to be explained by comparing with these methods is to show exploring the representation space provided by a pre-trained model with a slight distribution shift is more efficient than the primitive input space or low-level image space (\textit{e.g.} DCT space). The other branch is some extensible first-order white-box attack methods that could be adapted to this setting. We design two baselines: Trans-FGSM and Trans-FGM. The two baselines are based on the Fast Gradient Sign Method and the Fast Gradient Method \cite{goodfellow2014explaining}. While conducting them, we use the same pre-trained white-box model as our algorithm. The two baselines iteratively run SimBA algorithm by randomly selecting from the Cartesian basis on the representation space. And the updating rule for the perturbation on input space is given by:
\[
    \mbox{Trans-FGSM:} \quad \delta_{t+1} = \delta_t \pm \alpha \cdot sign(\nabla_x J_h(x_t; e_i))
\]
\[
    \mbox{Trans-FGM:} \quad \delta_{t+1} = \delta_t \pm \alpha \cdot \frac{\nabla_x J_h(x_t; e_i)}{||\nabla_x J_h(x_t; e_i)||_2}
\]
where $e_i$ is the selected $i_{\textit{th}}$ basis and $\nabla_x J_h(x_t; e_i)$ is the gradient of the $i_{\textit{th}}$ output representation value $z_i$ with respect to the input $x_t$. By comparing these two methods, we will show afterward that exploring the eigenvector orthogonal subspace on representation space is more efficient than other subspace, which is consistent with Theorem 2. It is noteworthy that ParsiBA and Trans-FGSM are originally for $l_\infty$ attack. More details of the experimental setting is shown in Appendix.

\vspace{-0.5em}
\subsection{ImageNet Results}
\vspace{-0.3em}

\begin{table}[tp] \footnotesize
    \caption{Results for untargeted and targeted attack on ImageNet. Max queries = 10000}
    \renewcommand\tabcolsep{3.0pt}
    \renewcommand{\arraystretch}{1.1}
    \label{tab:imagenet}
    \centering
    \begin{tabular}{cc|cccc|cccc}  
    \hline
    
    \multirow{4}*{\textbf{Method}} & \multirow{4}*{\textbf{Transfer}} & \multicolumn{4}{c|}{\textbf{Untargeted}} & \multicolumn{4}{c}{\textbf{Targeted}} \\
    \cline{3-10}
   
    ~ & ~ & \makecell{\textbf{Avg.} \\ \textbf{queries} \\ \textbf{(success)}} & \makecell{\textbf{Avg.} \\ \textbf{queries} \\ \textbf{(all)}} & \makecell{\textbf{Success} \\ \textbf{Rate}} & \textbf{Avg.} $l_2$ & \makecell{\textbf{Avg.} \\ \textbf{queries} \\ \textbf{(success)}} & \makecell{\textbf{Avg.} \\ \textbf{queries} \\ \textbf{(all)}} & \makecell{\textbf{Success} \\ \textbf{Rate}} & \textbf{Avg.} $l_2$ \\
   
    \hline  
   
    SimBA & No & 1433 & 1519 & 0.990 & 3.958 & 5762 & 6719 & 0.774 & 8.424   \\     
    SimBA-DCT & No & 947 & 1056 & 0.988 & 3.083 & 4387 & 5437 & 0.813 & 6.612   \\
    ParsiBA & No & 997 & 1312 & 0.965 & 3.957 & 5075 & 6878 & 0.634 & 8.422  \\
    Trans-FGSM & Yes & 510 & 614 & 0.989 & 4.634 & 3573 & 4807 & 0.808 & 9.484 \\
    Trans-FGM & Yes & 675 & 843 & 0.982 & 3.650  & 3562 & 5867 & 0.642 & 8.200 \\
    \textbf{EigenBA (Ours)} & Yes & 383 & 518 & 0.986 & 3.622  & 2730 & 4140 & 0.806 & 7.926\\
   
    \hline
    \end{tabular}
\end{table}

We show the results of attacking ImageNet in Table \ref{tab:imagenet}. We adjust the hyper-parameter stepsize $\alpha$ for our method and all baselines to make sure the average $l_2$-norm of perturbation is close and compare average queries and success rate for simplicity. 

Comparing EigenBA to those algorithms without transferred pre-trained model, our method uses at most 49\% query numbers for untargeted attack and about 76\% for targeted attack and reaches a comparable success rate, which demonstrates that utilizing the representation space of a smaller model could attack more efficiently than the original pixel space or manually designed low-level DCT space. Moreover, some state-of-the-art methods, \textit{e.g.} SimBA-DCT, take advantage of the general properties of images and could not be generalized to other fields. In contrast, our method is applicable to any black-box attack scenario with a pre-trained model.

Comparing EigenBA to Trans-FGM, which is more suitable for $l_2$-attack than Trans-FGSM, our method use about 61\% query numbers for untargeted attack and 71\% for targeted attack. The results demonstrate that exploring the eigenvector subspace generated by solving problem given by \ref{eqn:opt} on the representation space is more efficient than the subspace generated by randomly chosen orthogonal basis, which is consistent to our theoretic reflection in Section 3. It is noteworthy that Trans-FGM performs similar or even worse to SimBA-DCT, which shows transfer-based method is not necessarily better than pure black-box attack methods, depending on whether the representation space provided by the transferred model is strong enough and the efficiency of the algorithm itself.

Figure \ref{fig:res} further shows the change of success rate with the change of query number limit for EigenBA, SimBA-DCT and Trans-FGM. We can conclude the distribution of the query number for 1000 attacked images for each attack method. Our EigenBA algorithm performs especially better when the limit of query number is relatively small, which will significantly reduce the query cost.

\vspace{-0.5em}
\subsection{Cifar-10 Results}
\vspace{-0.3em}

\begin{table}[tp] \footnotesize
    \caption{Results for untargeted and targeted attack on Cifar-10. Max queries = 2000}
    \renewcommand\tabcolsep{3.0pt}
    \renewcommand{\arraystretch}{1.1}
    \label{tab:cifar10}
    \centering
    \begin{tabular}{cc|cccc|cccc}  
    \hline
    
    \multirow{4}*{\textbf{Methods}} & \multirow{4}*{\textbf{Transfer}} & \multicolumn{4}{c|}{\textbf{Untargeted}} & \multicolumn{4}{c}{\textbf{Targeted}} \\
    \cline{3-10}
   
    ~ & ~ & \makecell{\textbf{Avg.} \\ \textbf{queries} \\ \textbf{(success)}} & \makecell{\textbf{Avg.} \\ \textbf{queries} \\ \textbf{(all)}} & \makecell{\textbf{Success} \\ \textbf{Rate}} & \textbf{Avg.} $l_2$ & \makecell{\textbf{Avg.} \\ \textbf{queries} \\ \textbf{(success)}} & \makecell{\textbf{Avg.} \\ \textbf{queries} \\ \textbf{(all)}} & \makecell{\textbf{Success} \\ \textbf{Rate}} & \textbf{Avg.} $l_2$ \\
   
    \hline  
   
    SimBA & No & 468 & 471 & 0.998 & 0.578 & 817 & 883 & 0.944 & 0.782   \\     
    SimBA-DCT & No & 437 & 440 & 0.998 & 0.575 & 772 & 830 & 0.953 & 0.777   \\
    Trans-FGSM & Yes & 111 & 115 & 0.998 & 0.638 & 305 & 310 & 0.997 & 0.918 \\
    Trans-FGM & Yes& 129 & 135 & 0.997 & 0.524  & 369 & 419 & 0.969 & 0.747 \\
    \textbf{EigenBA (Ours)} & Yes& 95 & 99 & 0.998 & 0.472  & 241 & 244 & 0.998 & 0.692\\
   
    \hline
    \end{tabular}
\end{table}

\begin{figure}   
    \begin{minipage}{0.48\linewidth}
        \centerline{\includegraphics[width=7.0cm]{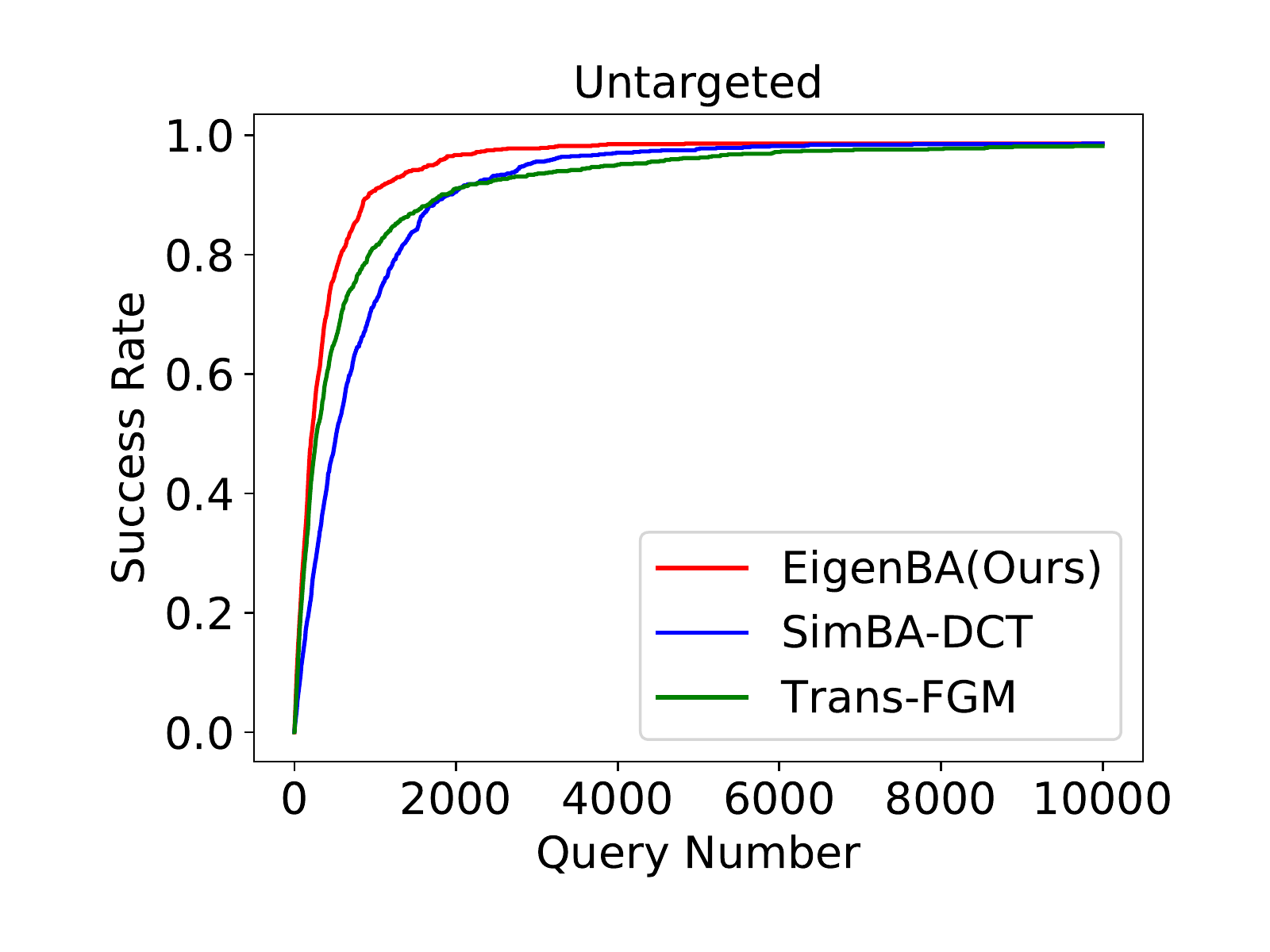}}
    \end{minipage}
    \hfill
    \begin{minipage}{.48\linewidth}  
        \centerline{\includegraphics[width=7.0cm]{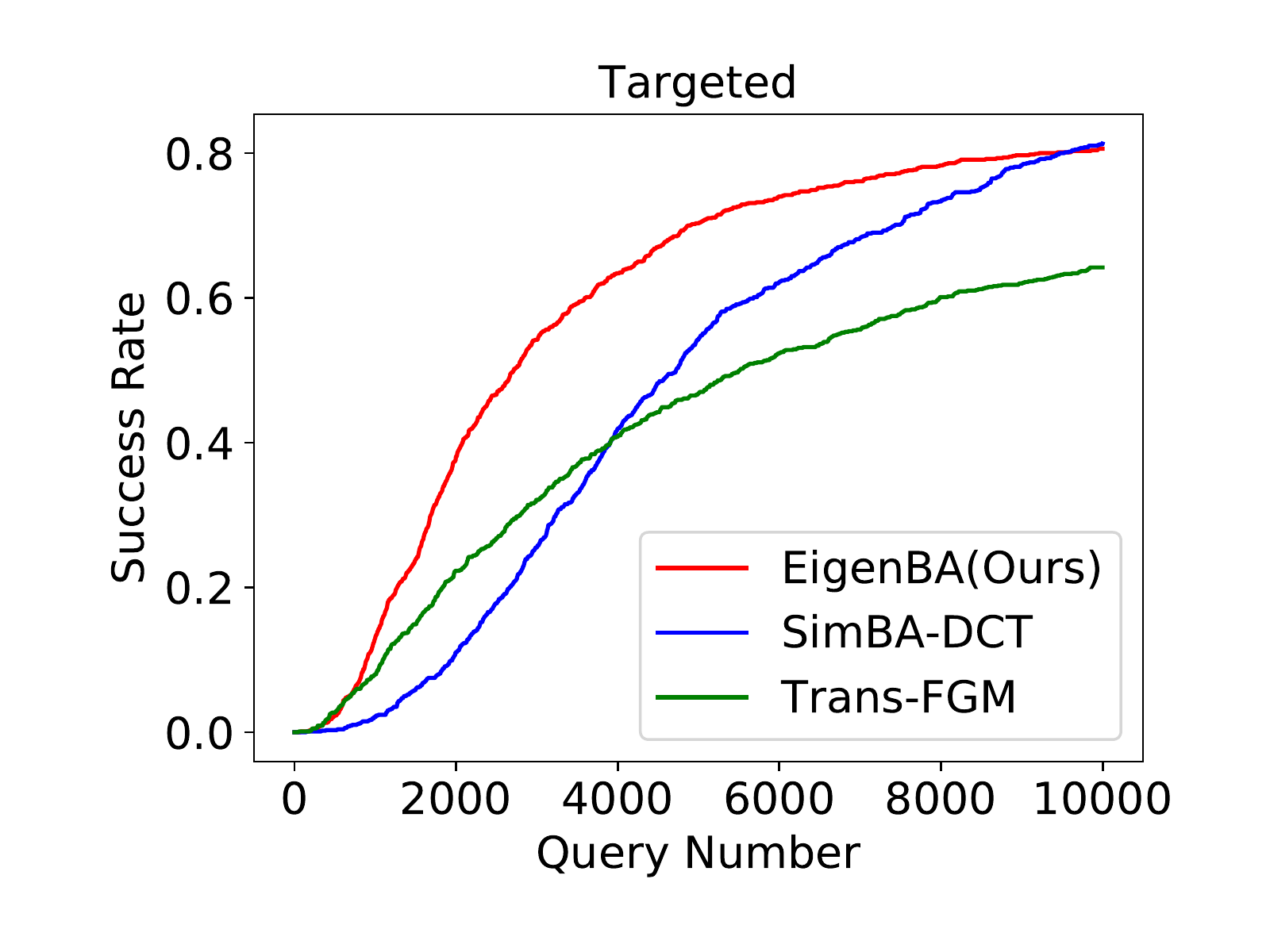}}
    \end{minipage}   
    \caption{The change of success rate with fixed query limit on ImageNet.} 
    \label{fig:res}
\end{figure}

Similar to the experiment on ImageNet, our EigenBA method still performs the best of all on attacking Cifar-10, with a transferred model trained on Cifar-100, as shown in Table \ref{tab:cifar10}. Compared with SimBA-DCT, our algorithm uses 22\% and 29\% query numbers on untargeted attack and targeted attack. Compared with Trans-FGM, the proportion is 73\% and 58\%. The results further show that even the classes of the transferred model are different from the attacked model, depending on the strong generalization ability of neural network, the representation space of the transferred network can still improve the efficiency of black-box attack.

\vspace{-0.5em}
\subsection{Ablation Study: How the generalization ability affects the efficiency of attack?}
\vspace{-0.3em}

\begin{table}[!t] \footnotesize
    \caption{Set a certain proportion of the parameters of the pre-trained model in EigenBA to zero, and compared with SimBA-DCT for attack on Cifar-10.}
    \renewcommand\tabcolsep{5.0pt}
    \renewcommand{\arraystretch}{1.1}
    \label{tab:ablation}
    \centering
    \begin{tabular}{cccccc}  
    \toprule   
   
    \textbf{Methods} & \makecell{\textbf{Parameters}\\\textbf{Reserved Rate}} & \makecell{\textbf{Avg. queries} \\ \textbf{(all)}} & \textbf{Success Rate} & \textbf{Avg.} $l_2$ & \makecell{\textbf{Pre-trained Model} \\ \textbf{Accuracy}}\\  
   
    \midrule   
   
    \multirow{6}*{EigenBA} & 1.0 & 88 & 1.000 & 0.453 & 89.19\% \\     
    ~ & 0.9 & 85 & 1.000 & 0.446 & 86.17\%   \\
    ~ & 0.8 & 130 & 0.997 & 0.459 & 77.78\% \\
    ~ & 0.7 & 195 & 0.999 & 0.560 & 69.36\%  \\
    ~ & 0.6 & 382 & 0.991 & 0.760 & 35.36\%\\
    ~ & 0.5 & 700 & 0.921 & 0.951 & 27.57\%\\
    \hline
    SimBA-DCT & - & 440 & 0.998 & 0.575 & - \\
   
    \bottomrule  
    \end{tabular}
\end{table}

From the results of Section 4.2 and 4.3, one interesting problem is how strong the generalization ability of the pre-trained white-box model can help improve the efficiency of black-box attack. In this section, we conduct an ablation study on this problem. In this experiment, we set the pre-trained model and the attacked model to be the same Resnet-18 trained on Cifar-10, but randomly setting a certain proportion of parameters to be zero for the pre-trained model. If the reserve rate of parameters is 1.0, the pre-trained model will be totally the same with the attacked model, and with the decrease of the reserve rate, the generalization ability of the pre-trained model will become weaker. Setting a random part of parameters to zero could also be seen as a change to the structure of the pre-trained network. We test the attack efficiency of EigenBA under different reserve rate ratios and compare the result with the pure black-box method SimBA-DCT in table \ref{tab:ablation}. We also report the pre-trained model accuracy in different settings by fixing network parameters below the final representation layer and only re-training the top classifier with the training dataset of Cifar-10, which reflects the generalization ability of the pre-trained model. 

The results show that when the reserve rate is larger than 0.7, the pre-trained model is helpful to the efficiency of the black-box attack (both query number and average $l_2$ are lower.). And when the reserve rate is smaller than 0.5, the model will degrade the attack efficiency. The breakeven point may appear around 0.6. It shows that even the pre-trained model cannot achieve the classification accuracy of the attacked model, it can still improve the efficiency of the black-box attack, \textit{e.g.} in this experiment, a pre-trained model with reserve rate of 0.7 just reaches 69.36\% of classification on Cifar-10, roughly equivalent to a shallow convolutional network \cite{mcdonnell2015enhanced}, which is largely below the attacked model with 89.19\%. Hence, as the representation layer of the modern neural networks generally has a strong transferability \cite{yosinski2014transferable}, our EigenBA algorithm has strong applicability in practice.

\vspace{-0.7em}
\section{Conclusions}
\vspace{-0.6em}
In this paper, we proposed a novel setting for transfer-based black-box attack. Attackers may take advantage of a fixed white-box pre-trained model without additional training data, to improve the efficiency of the black-box attack. To solve this problem, we proposed EigenBA, which iteratively adds or subtracts perturbation to the input sample such that the expected change on the representation space of the transferred model to be the direction of right singular vectors corresponding to the first $K$ singular values of the Jacobian matrix of the pre-trained model. Our experiments showed that EigenBA is more query efficient in both untargeted and targeted attack compared with state-of-the-art transfer-based and gradient estimation-based attack methods. We believe that the applicability in the real world of our algorithm will promote more research on robust deep learning and the generalization ability between deep learning models.

\section*{Broader Impact}

This paper discusses how to carry out an efficient black-box attack to a deep model, which reveals the shortcomings of robustness for deep neural networks. This paper aims to promote the study of robust learning and encourage researchers to pay more attention to the safety of machine learning algorithms. Ethical risk includes that lawbreakers may abuse our algorithm to attack mature machine learning systems, such as face recognition, intelligent driving \textit{etc.}

\bibliography{article} 
\bibliographystyle{plain}

\includepdf[pages=1-last]{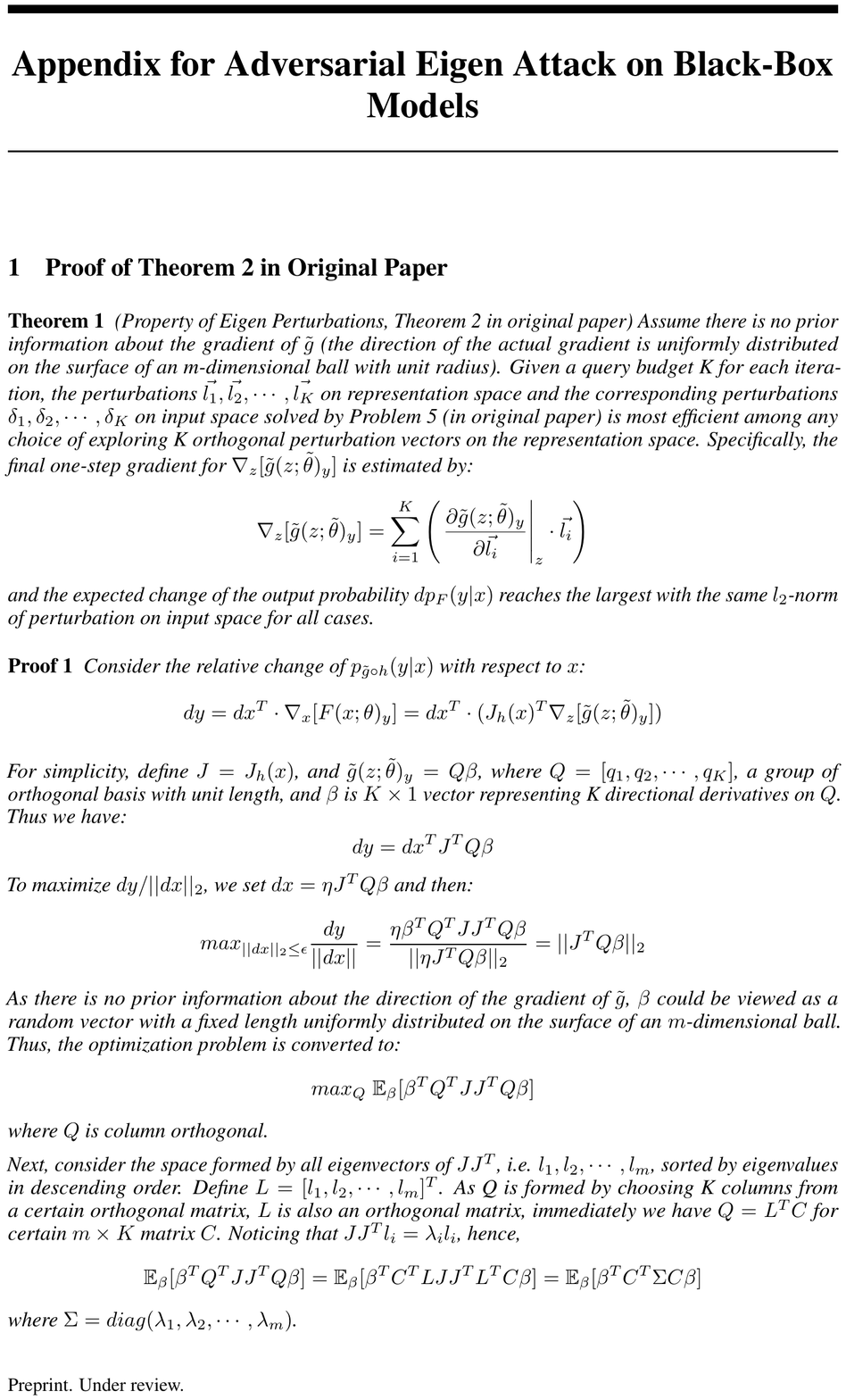}

\end{document}